\newtheorem{theorem}{Theorem} 
\def\BibTeX{{\rm B\kern-.05em{\sc i\kern-.025em b}\kern-.08em
    T\kern-.1667em\lower.7ex\hbox{E}\kern-.125emX}}
\begin{document}

\title{DP-Net: Dynamic Programming Guided Deep Neural Network Compression
}

\author[*]{Dingcheng Yang}
\author[*]{Wenjian Yu}
\author[**]{Ao Zhou}
\author[*]{Haoyuan Mu}
\author[***]{Gary Yao}
\author[**]{Xiaoyi Wang}
\affil[*]{BNRist, Dept. Computer Science \& Tech., Tsinghua University}
\affil[**]{Beijing Engineering Research Center for IoT Software and Systems, Beijing University of Technology}
\affil[***]{Department of Electrical Engineering and Computer Science, Case Western Reserve University}
\affil[ ]{\textit {\{ydc19, muhy17\}@mails.tsinghua.edu.cn;yu-wj@tsinghua.edu.cn}}
\affil[ ]{\textit {S201861539@emails.bjut.edu.cn;wxy@bjut.edu.cn}}
\affil[ ]{\textit {gxy76@case.edu} }
 
\renewcommand\Authands{ and }

\maketitle

\begin{abstract}
In this work, we propose an effective scheme (called DP-Net) for compressing the deep neural networks (DNNs). It includes a novel dynamic programming (DP) based algorithm to obtain the optimal solution of weight quantization and an optimization process to train a clustering-friendly DNN. Experiments showed that the DP-Net allows larger compression than the state-of-the-art counterparts while preserving accuracy. The largest 77X compression ratio on Wide ResNet is achieved by combining DP-Net with other compression techniques. Furthermore, the DP-Net is extended for compressing a robust DNN model with negligible accuracy loss. At last, a custom accelerator is designed on FPGA to speed up the inference computation with DP-Net.
\end{abstract}

\begin{IEEEkeywords}
Dynamic Programing, 
Neural Network Compression, Robust Model, Weight Quantization.
\end{IEEEkeywords}

\section{Introduction}
Deep neural networks (DNNs) have been demonstrated to be successful on many tasks. However, the size of DNN model has continuously increased while it achieves better performance. As a result, the storage space of DNN becomes a major concern if we deploy it on resource-constrained devices, especially in edge-computing applications like face recognition on mobile phone and autonomous driving.

In recent years, there are many studies on compressing DNN models. The proposed techniques consist of pruning \cite{lecun1990optimal,han2015learning}, knowledge distillation \cite{hinton2015distilling}, low-bit neural network \cite{courbariaux2016binarized}, compact architectures \cite{howard2017mobilenets} and weight quantization \cite{chen2015compressing}.
Two kinds of weight quantization method, vector quantization and scalar quantization, were proposed in \cite{gong2014compressing}. The scalar quantization was then employed in several work on model compression \cite{han2015deep,chen2016compressing,chen2015compressing}.  Low-bit representation of neural network can be regarded as a variant of scalar quantization, which restricts the weights to low-bit floating-point numbers \cite{courbariaux2016binarized}. Therefore, an accurate weight quantization provides the upper bound of performance for the corresponding low-bit representation. 



In previous work, the K-means clustering solved with the Lloyd's algorithm \cite{lloyd1982} was used for the weight quantization. However, the Lloyd's algorithm is heuristic. Its result is not optimal and is sensitive to  the initial solution, as revealed by experiments in 
\cite{han2015deep}. In this work, we consider the K-means clustering in scalar quantization, but do not use the Lloyd's algorithm. 
Notice the K-means clustering of multi-dimensional data is NP-hard \cite{mahajan2009planar}.
A key contribution of this work is a dynamic programming (DP) based algorithm with $O(n^2K)$ complexity, which produces the optimal solution of the scalar quantization problem. Its advantages over the Lloyd's algorithm lead to better performance in DNN  compression.

On the other hand, recent work shows DNNs are vulnerable to adversarial examples \cite{szegedy2013intriguing}, which can be crafted by adding visually imperceptible perturbations on images. A robust model against the adversarial attacks is needed for security-critical tasks, and becomes a focus \cite{zhang2019theoretically}. However, there is few work devoted to compress the robust DNN model. We demonstrate the proposed  DP-Net compression scheme is also applicable to the robust DNN model, enabling large compression with negligible accuracy drop. 

To validate the speedup in inference brought by the DP-Net, we have also designed a custom accelerator with specific implementation of matrix-vector multiplication. The experiments  on FPGA  show that we can accelerate the computation with fully-connected (FC) layer and convolutional layer remarkably.
The major contributions of this work are as follows.
\begin{enumerate}
    \item A DP based algorithm 
    is proposed to obtain the optimal solution of scalar quantization. 
    Based on it and a clustering-friendly training process, a compression scheme called DP-Net is proposed, which exhibits larger compression of DNNs like FreshNet, GoogleNet than Deep K-means \cite{wu2018deep} and other counterparts. Our experiment show that with DP-Net, pruning and Huffman coding, the storage of Wide ResNet is reduced by 77X, while achieving higher accuracy than Deep K-means.
    \item DP-Net is also extended to the state-of-the-art TRADES model for robust DNNs \cite{zhang2019theoretically}, and achieves 16X compression with negligible accuracy loss for ResNet-18.
    \item To show the inference acceleration brought by DP-Net, we have designed a custom accelerator on FPGA. Experiments on GoogleNet reveal that at least 5X speedup can be achieved for inference computations.
    
\end{enumerate}{}
\section{Problem Formulation}

Regard the weights of DNN as a sequence of vectors $W=\left\{W_1, W_2, \cdots, W_m\right\}$,  where $W_i \in \mathbb{R}^{n_i}$. 
 We consider the scalar weight quantization problem and express the clustering result as  $C= [C_1, C_2, \cdots, C_m]  \in \mathbb{R}^{K \times m}$, where $K$ is the number of clusters and column vector $C_i$ contains the $K$ cluster centers. Without the quantization and using 32 bit floating-point numbers, the number of bits used for storage is $32 \sum_{i=1}^m n_i$. After the quantization, each element of $W_i$ is one of the $K$ elements in $C_i$. So, we just need $\log_2 K \cdot \sum_{i=1}^m n_i$ bits to encode the index and $32mK$ bits to store the cluster centers. Fig. \ref{fig:format} provides an example of the storage formats. This scalar quantization leads to the DNN compression ratio:
\begin{equation}
\label{eq:eq1}
    r=\frac{32\sum_{i=1}^m n_i}{\log_2 K\cdot\sum_{i=1}^m n_i +32mK}
\end{equation}{}

\begin{figure}
 \setlength{\abovecaptionskip}{1pt}
 \setlength{\belowcaptionskip}{1pt}
  \centering
    \includegraphics[width=3.3in]{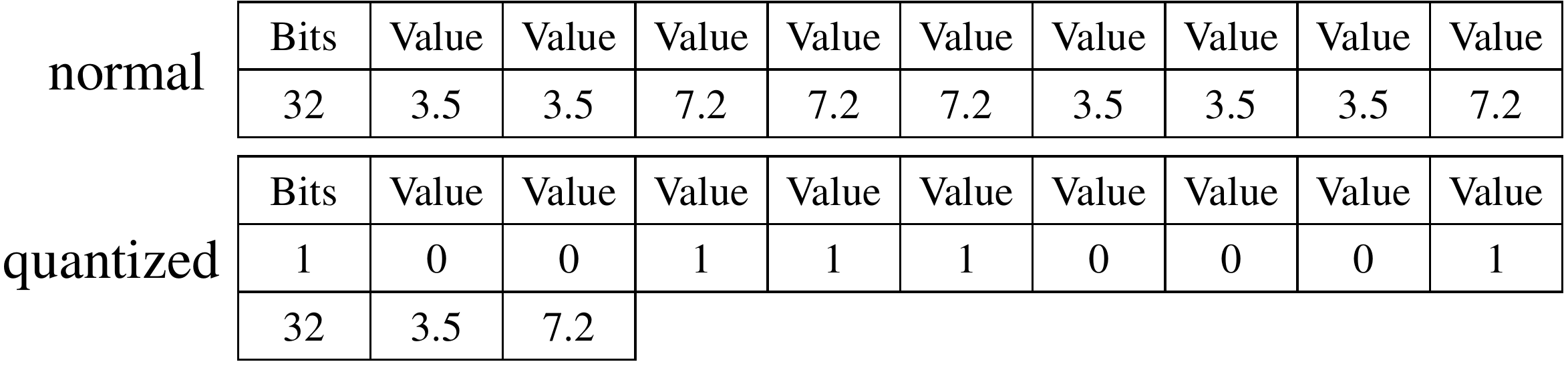}
    \caption{An example of storage formats. The top table represents the normal storage for 9 32-bit floating-point numbers. The bottom table represents the quantized storage format, where only two 32-bit floating-point numbers (3.5 and 7.2) are stored along with 9 bits indicating which value each number is.}
  \label{fig:format}
\end{figure}


Training a DNN with consideration of weight quantization can be formulated as an optimization problem. If the loss function of training DNN is denoted by  $f(W)$, the problem formulation should be:
\begin{eqnarray}
\label{eq:eq2}
 &    \min_{W, C}  f(W),  ~ ~ \\
    & \text{s. t.} ~  ~ C =[C_1, C_2, \cdots, C_m] \in \mathbb{R}^{K \times m}, \nonumber
    \\ & W=\left\{W_1, W_2, \cdots, W_m\right\}, ~ \text{and} ~  \forall i,j, ~  W_{i,j} \in C_i ~ . \nonumber
\end{eqnarray}
The constraint means every element in column vector $W_i$ appears in the vector $C_i$ containing cluster centers.

\section{The DP-Net Scheme for DNN Compression}

Before presenting the DP-Net scheme, we show the drawback of the conventional weight quantization. In Fig. \ref{fig:vis1},  the weights in the first convolution layer of the FreshNet \cite{chen2016compressing} are visualized. It shows that the weights follow the Gaussian distribution, which is consistent with some Bayesian methods assuming that the neural network parameters are Gaussian \cite{ullrich2017soft} and the phenomenon observed by \cite{chen2016compressing} that the weights of learned convolutional filters are typically smooth.
This means that the learned weights may be unsuitable for clustering or quantization. Thus, the network has to be retrained to compensate  the accuracy loss caused by quantization \cite{han2015deep}. However,  for each weight the cluster it belongs to is fixed during the retraining. This is the same as what's done for HashedNet \cite{chen2015compressing}, i.e  randomly grouping each weight and then training, except for a different initial solution. 
Fixing the clusters means a fixed network architecture. 
This is disadvantageous, because 
recent study has shown that the architecture of a network is more important than the initial solution \cite{liu2018rethinking}.

Inspired by Deep K-means \cite{wu2018deep}, we propose to directly optimize the Lagrangian function of (\ref{eq:eq2}) during the training process. The problem becomes an unconstrained optimization: 
\begin{align}
\label{eq:eq3}
    \min_{W, C} & f(W) + \lambda \sum_{i=1}^m\sum_{j=1}^{n_i} \min_{1 \le k \le K} (W_{i,j}-C_{i,k})^2 ,
\end{align}
where $\lambda$ is the Lagrange multiplier. By minimizing this new loss function, we can train a clustering-friendly network (an example is  shown in Fig. \ref{fig:vis2}). 
This is different from Deep K-means, as we are not optimizing the relaxed form of (\ref{eq:eq3}) and thus achieve better accuracy.
\begin{figure}
 \setlength{\abovecaptionskip}{1pt}
  \centering
    \subfigure[]
    {
    \includegraphics[width=1.6in]{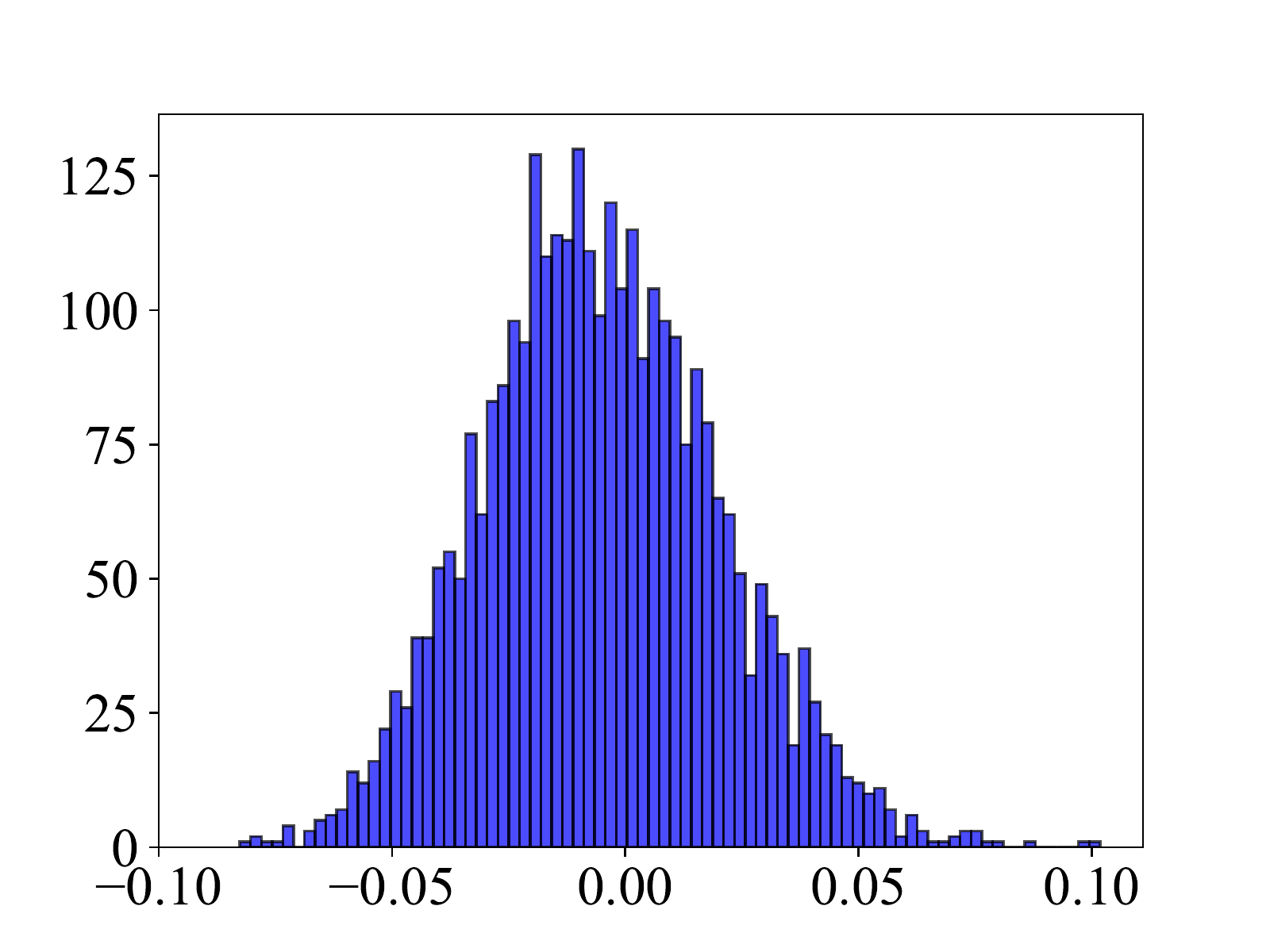}
    \label{fig:vis1}
    }  
    \subfigure[]
    {
    \includegraphics[width=1.6in]{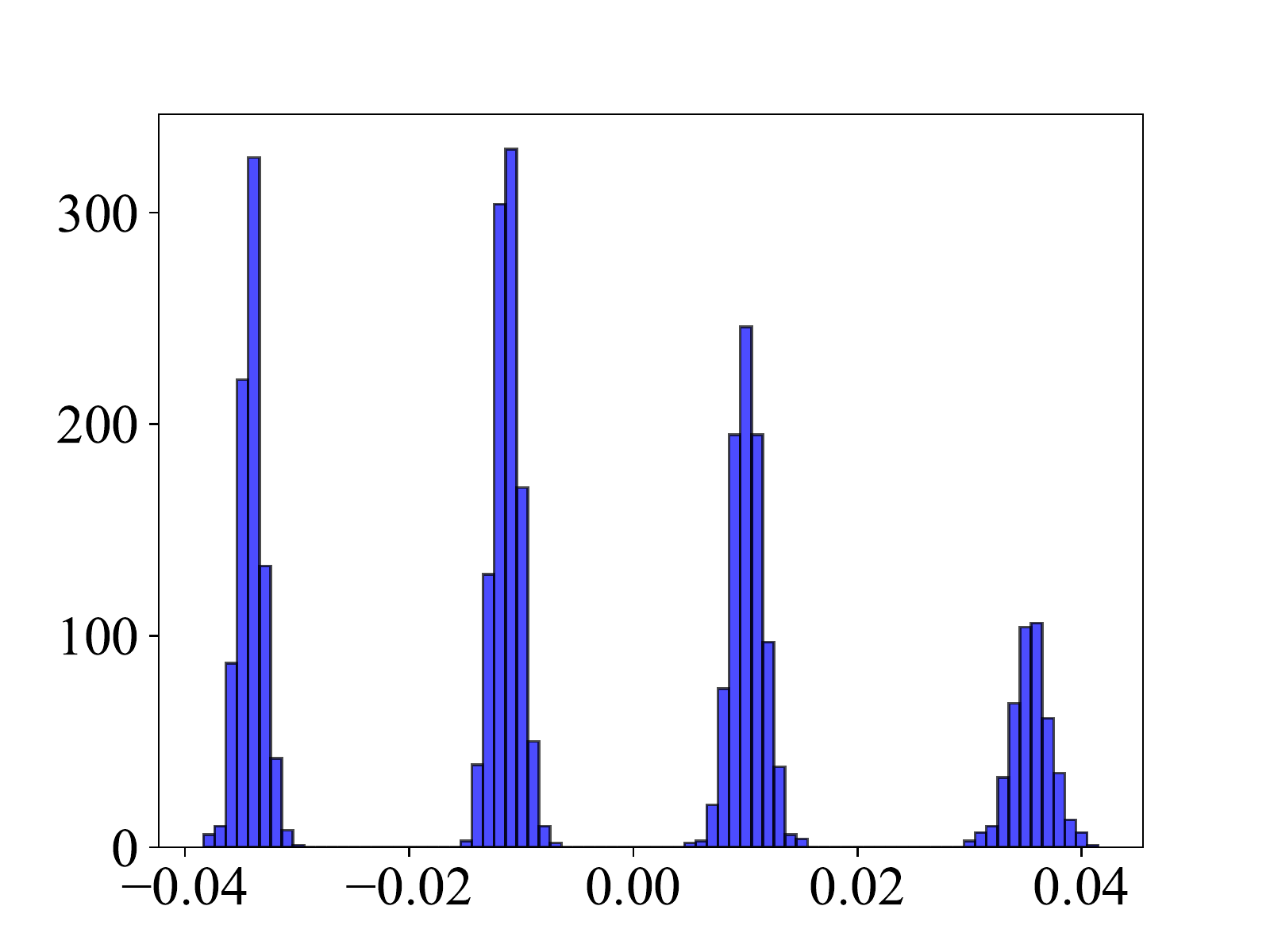}
    \label{fig:vis2}
    }
  \caption{The histograms of  weights in the FreshNet's first convolution layer obtained from (a)  a normal training, and (b) a clustering-friendly training.}
  \label{fig:visualize}
\end{figure}
The idea of DP-Net is to perform such a clustering-friendly training through  alternatively optimizing $W$ and $C$, 
where after every $t$ epochs of stochastic gradient descent (SGD) based optimization of $W$, we  optimize $C$ by solving a scalar K-means clustering. 
The novelty of DP-Net also includes a dynamic programming (DP) based algorithm to obtain the optimal solution of the scalar clustering, which is  employed again to perform the weight quantization after training.

\subsection{DP Based Algorithm for Scalar Quantization}


Although the K-means clustering of $d$-dimensional data is NP-hard for any $d \ge 2$, the optimal solution of scalar quantization (corresponding to the clustering with $d=1$) can be obtained in polynomial time, based on DP and Theorem \ref{thm:theorem1}.

\begin{theorem}{}
\label{thm:theorem1}
Let $x_1 \le x_2 \le \cdots \le x_n$ be $n$ scalars which need to be clustered into $K$ classes. The clustering result is expressed as the index set $p=\{p_1, p_2, \cdots, p_n\}$, which means $x_i$ belongs to the $p_i$-th cluster ($1 \le p_i \le K$). If the loss function of the clustering is
\begin{equation}
   g(p)=\sum_{i=1}^n (x_i-c_{p_i})^2 ~,
\end{equation}
where $c_1 < c_2 < \cdots < c_K$ are cluster centers,  an optimal solution $p$ satisfies: $1=p_1 \le p_2 \le \cdots \le p_n=K$.
\end{theorem}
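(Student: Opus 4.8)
The plan is to establish the claim in two stages: first the monotonicity $p_1 \le p_2 \le \cdots \le p_n$, and then the boundary identities $p_1 = 1$ and $p_n = K$. The monotonicity is the heart of the matter and is exactly what justifies the dynamic program, since it says that each cluster consists of a contiguous block of the sorted scalars.

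For the monotonicity I would use an exchange (swapping) argument. Suppose an optimal assignment $p$ contained an inversion, i.e.\ indices $i < j$ with $p_i > p_j$; since the centers are ordered, this means the smaller scalar $x_i \le x_j$ is matched to the larger center $c_{p_i} > c_{p_j}$ while $x_j$ is matched to the smaller center. I would then compare the cost of $p$ with the cost of the assignment $p'$ obtained by swapping the cluster labels of $x_i$ and $x_j$ and leaving all other labels unchanged. A direct expansion of the four affected squared terms collapses to
\begin{equation}
  g(p') - g(p) = 2\,(c_{p_i} - c_{p_j})\,(x_i - x_j),
\end{equation}
and since $c_{p_i} > c_{p_j}$ while $x_i \le x_j$, the right-hand side is $\le 0$. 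Hence the swap never increases the loss, so $p'$ remains optimal. To promote this local statement to a global one, I would restrict attention to adjacent inversions ($j = i+1$): if $p$ is not monotone then an adjacent inversion must exist, and swapping it strictly decreases the total number of inversions while preserving optimality. As the number of inversions is finite, iterating terminates at a monotone optimal assignment.

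For the boundary conditions, I would argue that in an optimal solution no cluster is empty, under the (mild and, in our setting, always satisfied) assumption that there are at least $K$ distinct values among the $x_i$ and that the centers are chosen optimally as cluster means. Indeed, if some cluster were empty, then by pigeonhole some non-empty cluster would contain two distinct values; reassigning part of that cluster to the empty one and recentering strictly lowers the within-cluster sum of squares, contradicting optimality. Combined with monotonicity, nonemptiness forces the labels to pass through $1, 2, \ldots, K$ in order across the contiguous blocks, so the first block begins at index $1$ and the last ends at index $n$, giving $p_1 = 1$ and $p_n = K$.

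The main obstacle I anticipate is the bookkeeping in the exchange step once the centers are themselves being optimized rather than held fixed. I would handle this by freezing the optimal centers $c$, performing all swaps purely on the assignment so that the cost comparison above applies verbatim, and observing that optimality of the cost is preserved throughout (re-optimizing the centers afterward can only keep it optimal). The reduction to adjacent inversions then rests on the standard fact that a sequence with no adjacent inversion is globally sorted, which closes the argument.
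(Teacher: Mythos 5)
Your proof is correct, but it follows a genuinely different route from the paper's. The paper argues in a single constructive step: freeze the optimal centers $c_1 < c_2 < \cdots < c_K$, cut the real line at the midpoints $c'_j = (c_j + c_{j+1})/2$, and reassign each $x_i$ to the cluster of the interval it falls in. Since this nearest-center reassignment can only decrease $g$, the new assignment is still optimal, and it is monotone by construction because the intervals are ordered; the boundary identities are then simply asserted as ``obvious.'' You instead work locally: the adjacent-swap exchange argument with the identity $g(p') - g(p) = 2\,(c_{p_i} - c_{p_j})(x_i - x_j) \le 0$ (your algebra checks out), iterated bubble-sort style until no inversion remains, with termination guaranteed by the strictly decreasing inversion count. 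What the paper's construction buys is brevity --- monotonicity is immediate, with no induction or termination argument. What your version buys is rigor on exactly the points the paper glosses over: the conclusion $p_1 = 1$ and $p_n = K$ genuinely requires every cluster to be non-empty in some optimal solution, which can fail in degenerate cases (fewer than $K$ distinct values); you make the needed hypothesis explicit and prove non-emptiness by the pigeonhole-and-split argument, whereas the paper's midpoint assignment silently assumes it. Your freezing of the centers during the swaps, with re-optimization afterward only preserving optimality, is also the right way to handle the joint optimization that the theorem statement leaves implicit. Finally, your exchange inequality transfers verbatim to any cost satisfying the analogous rearrangement (quadrangle) property, e.g.\ the $L_1$ loss, which matches the paper's later remark that Algorithm 1 extends to other clustering losses.
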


\begin{proof}{}
Suppose the ascending array $\{c_i\}$ are the cluster centers for the optimal solution.
Let $c'_0\!=\!-\infty$, $c'_1\!=\!(c_1\!+\!c_2)/2$,  $c'_2\!=\!(c_2\!+\!c_3)/2$, $\cdots$, $c'_{K-1}\!=\!(c_{K-1}\!+\!c_K)/2$, $c'_K\!=\!\infty$. We can construct a clustering by setting $p_i\!=\!j$, if 
$c'_{j-1}\!\le\!x_i\!<\!c'_j$. 
Obviously, ${p_i}$ minimizes $g(p)$ and corresponds
to an optimal solution satisfying $1=p_1 \le p_2 \le \cdots \le p_n=K$. 
\end{proof}

Theorem \ref{thm:theorem1} infers that by suitable interval partition we can get the optimal clustering of the weights (see Fig. 3). Let $x_1 \le \cdots \le x_N$ be the sorted weights, and $G_{n,k}$ be the minimum loss for clustering the first $n$ weights into $k$ clusters. We have
\begin{figure}[b]
 \setlength{\abovecaptionskip}{1pt}
  \centering
    \includegraphics[width=3.3in]{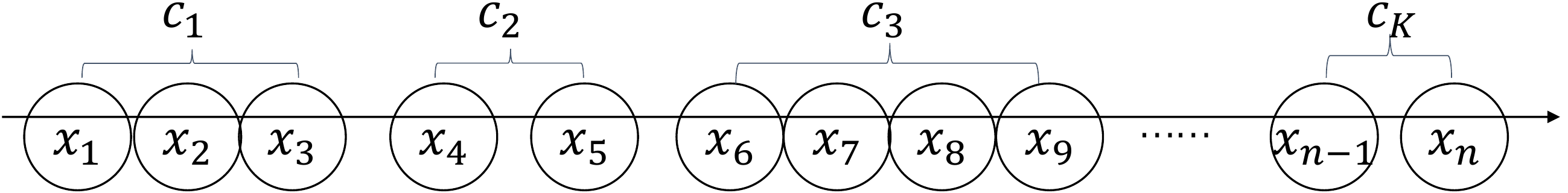}
  \caption{Illustration of the optimal solution of clustering $n$ scalars.}
    \label{fig:thm}
\end{figure}
\begin{equation}
    \label{eq:f}
    G_{n,k}= \min\limits_{k-1 \le i < n} G_{i,k-1}+h(i+1,n) ,
\end{equation}
where $k-1 \le i < n$ and $h(l, q)=\min_c \sum_{i=l}^q (x_i-c)^2$, it means the minimum clustering error (loss) for clustering $x_l, x_{l+1}, \cdots, x_t$ to one cluster. The situation  when $n=k$ is trivial, and we have $G_{k,k}=0$.  So, we consider the situations with $n>k$. To pursue the optimal clustering, we need to enumerate all possible $i$ which represents the largest index of scalar not belonging to the $k$-th cluster. Then, the minimum quantization error has two parts. One is the minimum quantization error that clustering $x_1,\cdots,x_{i}$ into $k-1$ clusters, i.e. $G_{i,k-1}$. The other part is the minimum quantization error that quantizing $x_{i+1},\cdots,x_n$ into one cluster, which is $h(i+1,n)$.  The latter part can be easily calculated, and the mean of scalars should be the cluster center. So, 
\begin{align}
    \label{eq:h}
    h(l,q)&=\sum_{i=l}^q(x_i-\frac{1}{q-l+1}\sum_{j=l}^q x_j)^2\\ \nonumber
    &=\sum_{i=l}^q x_i^2-(\sum_{i=l}^q x_i)^2/(q-l+1)
\end{align}{}

Based on (5) and (6) and utilizing the dynamic programming skills \cite{dreyfus1977art}, we derive  Algorithm \ref{alg:sq} for optimally solving the scalar quantization problem. It should be pointed out that, this algorithm can be easily extended to other kinds of clustering, such as that  the $L_2$ norm in the loss function  (4) is replaced with $L_1$ norm. The time complexity of Alg. 1 is  $O(n^2K)$. We will make sure that $n$ is not very large when applying it to the weight quantization.
\begin{algorithm}[h]
\caption{DP based scalar quantization}\label{alg:sq}
\begin{flushleft}
\textbf{Input:}  $n$ scalars $x_1 \le x_2  \cdots \le x_n$,  number of clusters $K$. \\
\textbf{Output:} The $K$ cluster centers in the optimal solution.
\end{flushleft}
\begin{algorithmic}[1]
\State Define an $n \times K$ array $G$, whose elements are all $\infty$.
\State Define an $n \times K$ array $L$, whose element $L_{i,k}$ represents the index of the smallest scalar within the same class as $x_i$, in the optimal solution of clustering the first $i$ scalars into $k$ classes.
\State $G_{0,0}\gets 0$
\For{$i \gets 1$ to $n$}
\State $s_1 \gets 0, ~ s_2 \gets 0$ \Comment{temporary variable}
\For{$j \gets i$ downto $1$} 
\State $s_1 \gets s_1+x_j, ~ s_2 \gets s_2+x_j^2$

 \Comment{$s_1=\sum_{k=j}^i x_k, ~ s_2=\sum_{k=j}^i x_k^2$ ~ }
\State $c \gets s_2-s_1^2/(i-j+1)$ \Comment{Eq. (\ref{eq:h})}
\For{$k \gets 1$ to $K$}
\If{$G_{i,k} > G_{j-1,k-1}+c$}
\State $G_{i,k} \gets G_{j-1,k-1}+c$ \Comment{Eq. (\ref{eq:f})}
\State $L_{i,k} \gets j$

\EndIf
\EndFor
\EndFor

\EndFor

\State centers $\gets [~ ]$
\While{$K > 0$}
\State $l \gets L_{n,K}$
\State centers.insert($\frac{\sum_{i=l}^n x_i}{n-l+1}$) \Comment{The center for last cluster.}
\State $n \gets L-1, ~ K \gets K-1$
\EndWhile
\State \Return centers
\end{algorithmic}
\end{algorithm}


\subsection{More Details of DP-Net}
For an FC layer with an $n_{fc} \times m_{fc}$ matrix, we divide the parameters into $n_{fc}$ parts, each part contains $n=m_{fc}$ weights, which means we cluster the weights row by row. For a convolutional layer with an $n_{conv} \times m_{conv} \times h_{conv} \times w_{conv}$ tensor, we divide the parameters into $n_{conv}$ parts and each part contains $n=m_{conv} \times h_{conv} \times w_{conv}$ weights. In this way, the number of weights on which we do clustering will be no more than $10000$ for most mainstream DNN. So, the computational time for the DP based algorithm is much less than that for training the DNN. We call this quantization manner a fine-grained scalar quantization. The compression ratio of the fine-grained quantization is less than that of quantizing all weights as a whole, but the drop is small.  We can see from (\ref{eq:eq1}) that the summation of  $n_i \log_2 K$ terms dominates the denominator. Thus, the drop of compression ratio is negligible.

In spite of small drop of compression ratio, the fine-grained quantization can improve the accuracy. Clustering an FC layer with an $n\!\times\!m$ matrix into $8$ classes with the fine-grained quantization can be seen as clustering $nm$ scalars into $8n$ clusters. This greatly expand the  parameter space, compared with clustering all the weights into $8$ classes as a whole.

\subsection{Extension for Compressing Robust DNN Model}
The idea of making DP-Net applicable to  the robust DNN models
is to add the quantization error, i.e. the minimum loss $G_{n,K}$, as a regularization term to the loss function optimized during the training process.
We consider the state-of-the-art TRADES (TRadeoff-inspired Adversarial DEfense via Surrogate-loss minimization) model for robust DNN \cite{zhang2019theoretically}. The  new formulation for training becomes:
\begin{align}
    &\min_{W, C} \{ L(f(X; W), Y) +\max_{X' \in B(X, \epsilon)}\gamma L(f(X; W), f(X'; W)) \nonumber \\
    &+ \lambda \sum_{i=1}^m\sum_{j=1}^{n_i} \min_{1 \le k \le K} (W_{i,j}-C_{i,k})^2 \},
\end{align}
where $f(X; W)$ is the output vector of learning model given parameters $W$, $L(x, y)$ is the cross-entropy loss function and $\gamma$ is a regularization parameter to trade off between accuracy and robustness. With this formulation, we can train a clustering-friendly robust model.

\subsection{Inference Acceleration}

There are existing work on accelerating neural network on FPGA \cite{han2016eie} and CPU \cite{sotoudeh2019c3}. In \cite{han2016eie}, the DNN model is compressed  by scalar quantization so that it becomes small enough to be stored in SRAM, thus increasing the speed of memory access. In addition to this benefit, specific technique for DP-Net can be developed for more inference acceleration.

In the inference stage the major computation can be decomposed as matrix-vector multiplications, for the FC layer or the convolutional layer. Each matrix-vector multiplication consists of many vector dot products. For simplicity, we just consider accelerating the vector dot product.
For two $n$-dimensional vectors $a, b$, normally we need to perform $n$ additions and $n$ multiplications to make the dot product. With DP-Net, one vector, say $b$, is presented as $[c_{p_1},c_{p_2},\cdots, c_{p_n}]^T$, where $c$ is a $K$-dimensional vector and $p$ is the $n$-dimensional index vector ($1 \le p_i \le K$). Then, we have $a^T b=\sum_{k=1}^K c_k ( \sum_{i \in S_k} a_i )$, where $S_i$ is the set of indices $j$ for which $p_j=i$, and we only need to perform $K$ multiplications. This brings acceleration. If DP-Net is deployed on FPGA, significant speedup can be expected, as multiplication  is executed much slower than addition in FPGA. Notice $n$ is usually larger than $1000$, while $K$ is  no more than $16$ in our experiments.

\section{Experimental Results}
In this section, we present the experimental results to demonstrate the effectiveness of DP-Net. We conducted experiments on CIFAR-10 \cite{krizhevsky2009learning} and compared DP-Net with two baselines for compressing TTConv \cite{garipov2016ultimate} and FreshNet \cite{chen2016compressing}. The first baseline is Deep K-means \cite{wu2018deep}, which is the work most relevant to ours. The second baseline is using the Lloyd's  algorithm to do clustering during the training of DP-Net. We call it KMeans-Net. Furthermore, we used the three-stage pipeline to compress Wide ResNet \cite{zagoruyko2016wide}. 
Then, we did experiments on the ImageNet dataset, for compressing GoogleNet. We also applied our method to the robust models. The results show that our extended DP-Net works well and has good performance. Finally, we show the result of the custom accelerator for DP-Net on FPGA. 
In all experiments, the compression ratio (CR) is obtained with (\ref{eq:eq1}) and then rounded to an integer. 


\subsection{Experiment Setup}
DP-Net has only two hyperparameters: $\lambda$, the regularizier factor in (\ref{eq:eq3}); $t$, the clustering frequency during training. We choosed $\lambda=100$ for all experiments if not explicitly stated. The value of $t$ varies for different datasets because it affects the number of training epochs for the model to converge. 

\subsubsection{On CIFAR-10}
We choosed $t=20$ for all experiments on CIFAR-10 with normal model and used SGD with cosine annealing for training. The learning rate reduces from 0.05 to 0.01 for TTConv and FreshNet. As for Wide ResNet, the learning rate is initialized at 0.1. The inference accuracy of the pretrained models we obtained for TTConv, FreshNet and Wide ResNet are $91.45\%$, $87.51\%$, $93.58\%$, respectively. They are higher or equal to those reported in \cite{wu2018deep}. For TTConv and Wide ResNet, we trained 300 epochs. Because FreshNet is prone to overfitting \cite{chen2016compressing}, we just trained 150 epochs and obtained a better result than training 300 epochs. The number of training epochs used for DP-Net is consistent with the pretrained model. We use random cropping, random horizontal flipping and cutout for data augmentation.

\subsubsection{On ImageNet} 

Our experiment settings are consistent with the PyTorch example\footnote{https://github.com/pytorch/examples/tree/master/imagenet} except training epochs. We use the pretrained GoogleNet model provided by PyTorch, whose top-1 accuracy is $69.78\%$ and top-5 accuracy is $89.53\%$. The PyTorch's official example trained 90 epochs, and made the learning rate decay 10X every 30 epochs. We just trained 30 epochs and made the learning rate decay 10X every 10 epochs because the pretrained model provided a good initial solution. We choose $t=3$ for ImageNet because $3$ epochs are enough for SGD to converge when the cluster centers are fixed.

\subsubsection{For Robust Model} 
We did experiments on the robust models trained by TRADES \cite{zhang2019theoretically}. The training settings and evaluating settings are consistent with their public code\footnote{https://github.com/yaodongyu/TRADES}. We choose $t=5$ because their training epochs are small. At first, we used a small network which consists of four convolutional layers and three FC layers. This network is proposed by \cite{carlini2017towards}. We called it SmallCNN and trained it on the MNIST dataset. The pretained model achieved 99.46$\%$ accuracy on testing dataset and 96.87$\%$ accuracy under a powerful attack algorithm named PGD \cite{kurakin2016adversarial}. Then, we trained a robust ResNet-18 model on CIFAR-10, which achieved 92.15$\%$ accuray on testing dataset and 40.29$\%$ accuracy under the PGD attack.

\subsubsection{Custom Accelerator}
As a hardware platform, the Zynq SoC (XC7Z045FFG900-2) is used, mounted on the ZC706-evaluation-board, and operating at 100MHz. It offers 218,600 Look-Up Tables (LUTs), 437,200 Flip-Flops, 545 units of 36k block RAM (BRAM) and 900 DSP48E units. The Vivado 2018.2 is used for RTL behavioral simulation.
\subsection{Result on CIFAR-10}
In this subsection, we show the effectiveness of the proposed DP-Net for compressing several DNNs on CIFAR-10.
\subsubsection{Compressing TT-Conv}

The TT-Conv model \cite{garipov2016ultimate} contains six convolutional layers and one FC layer. The authors of TT-Conv used Tensor Train Decomposition to compress the convolutional layer by 4X while the accuracy decreases by $2\%$. Deep K-means \cite{wu2018deep} achieves a better result while compressing 4X, with less accuracy loss. Our experimental results are listed in Table \ref{tab:TTConv}. From it we see that with only 3 bits used to represent a scalar (equivalent to CR of 10X), there is no or negeligible accuracy loss.

The results in Table \ref{tab:TTConv} also demonstrate two interesting phenomena. First, the accuracy of DP-Net with 3 bits is 1.18$\%$ higher than KMeans-Net, which means that using the Lloyd's algorithm instead of DP greatly reduces the accuracy. This phenomenon verifies the value of the proposed DP based algorithm. Second, our DP-Net performs better than the pretrained network even after the quantizing process, possibly because the regularized factor can prevent network from overfitting, like what a $L_2$-norm regularized factor often behaves. Therefore, we believe that performing a suitable weight quantization can not only reduce the size of model, but also prevent overfitting and improve performance.
\begin{table}[h]
 \setlength{\abovecaptionskip}{1pt}
 \setlength{\belowcaptionskip}{1pt}
\begin{center}
\caption{The results of compressing TT-Conv. $\Delta$ means the change of accuracy compared to the pretrained model.
}
\label{tab:TTConv}
\small{
\begin{tabular}{|c|c|c|}
    \hline
     Model & CR & $\Delta (\%)$\\
    \hline
    TT Decomposition \cite{garipov2016ultimate} & 4 & -2.0\\
    \hline
    Deep K-means \cite{wu2018deep} & 2 & +0.05\\
    Deep K-means \cite{wu2018deep} & 4 & -0.04\\
    \hline
    KMeans-Net (3 bits) & 10 & -0.87\\
    DP-Net (3 bits) & 10 & \textbf{+0.31}\\
    \hline
\end{tabular}
}
\end{center}
\end{table}

The $\lambda$ and $t$ are hyperparameters in our method. We conducted two experiments to study the sensitivity of our method to them, with the results plotted in Fig. \ref{fig:exp}. From the figure we see that DP-Net is not very sensitive to the hyperparameters. The phenomenons mentioned previously still exist. DP-Net always produces higher accuracy than KMeans-Net under the same configurations in all experiments. And, most DP-Net  models with 3-bit representation performs better than their pretrained models. This shows that a clustering-friendly network with appropriate number of clusters may have better generalization than normal network, which make our method meaningful even when compression is not needed.

\begin{figure}[h]
 \setlength{\abovecaptionskip}{1pt}
  \centering
    \subfigure[]
    {
    \includegraphics[height=1.2in]{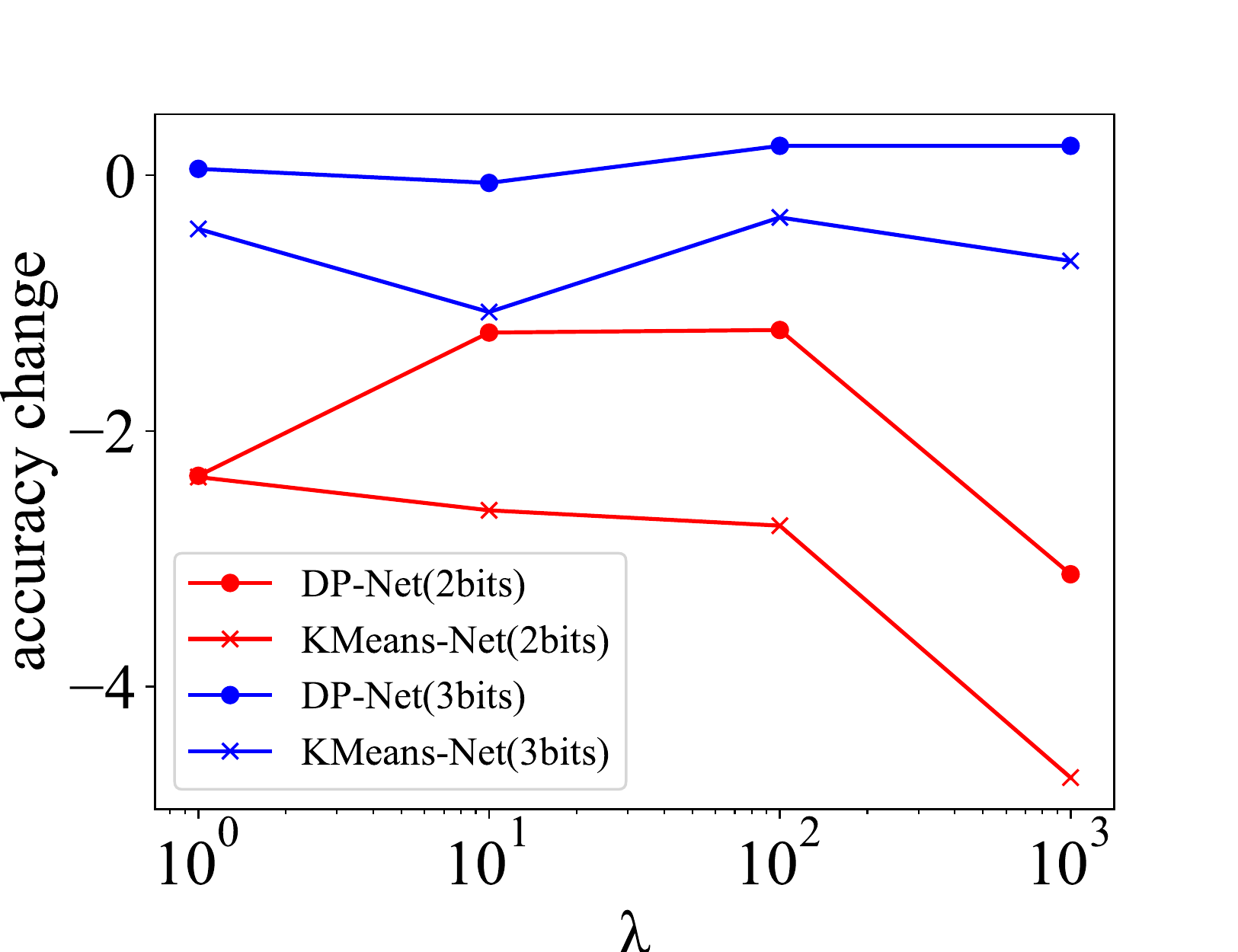}
    \label{fig:fixf}
    }  
    \subfigure[]
    {
    \includegraphics[height=1.2in]{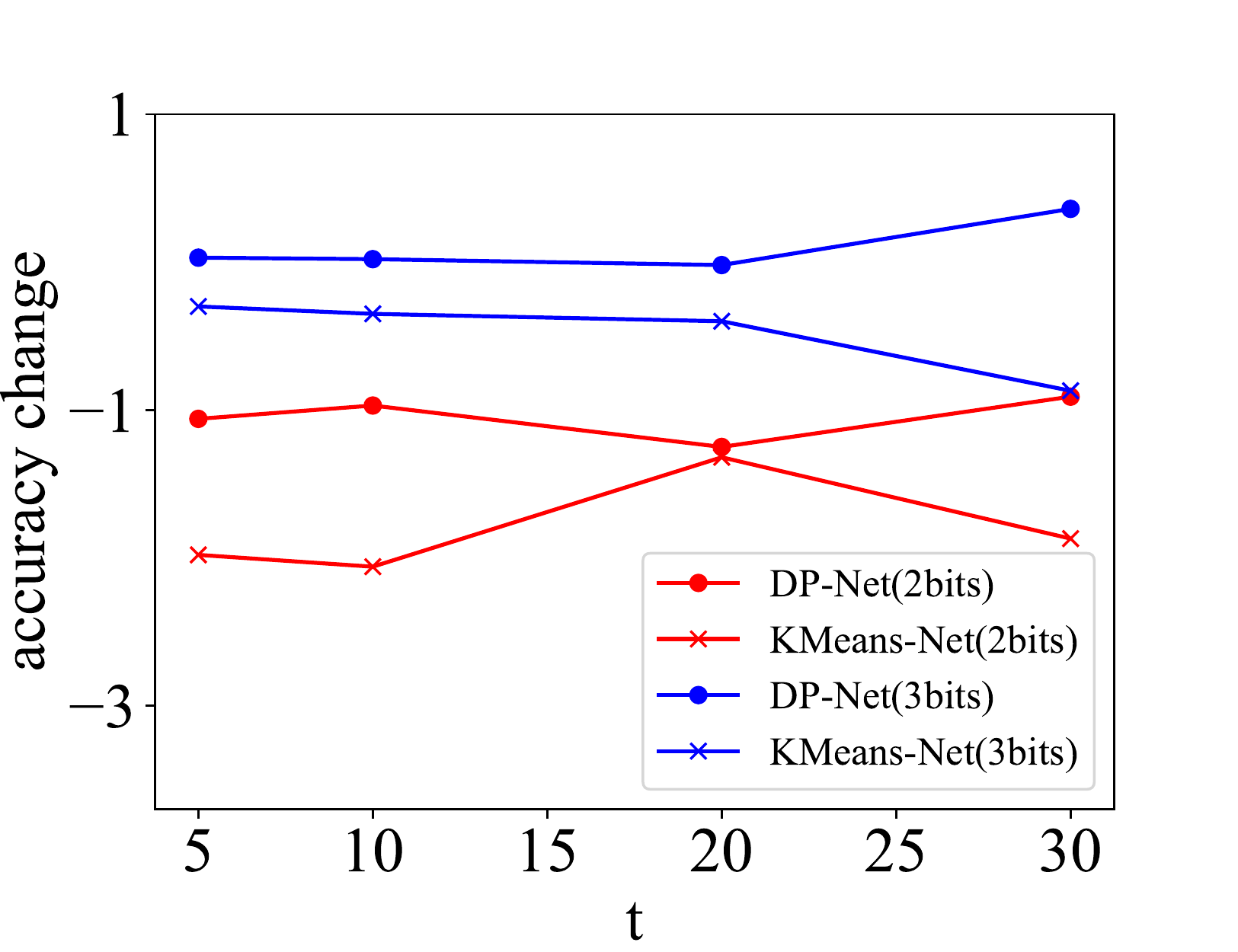}
    \label{fig:fixc}
    }
   \caption{The results of compressing TT-Conv. (a) The accuracy change for varied $\lambda$ with $t=20$. (b) The accuracy change for varied $t$ with $\lambda=100$.}
  \label{fig:exp}
\end{figure}

\subsubsection{Compressing FreshNet}
In \cite{chen2016compressing}, it is observed that the learned convolutional weights are smooth and low-frequency. So, they quantized the weights of their self-designed model called FreshNet on frequency domain, and then trained the quantized network. This model's accuracy decreased by $6.51\%$ when the CR is 16; with the same CR the accuracy is only reduced by $1.3\%$ if Deep K-means is used. This verifies that training a clustering-friendly network is better than training a quantized network. 
The comparison between DP-Net and related work are shown in Table \ref{tab:FreshNet}. From it we see that  DP-Net's accuracy is the highest, and much better than the methods in \cite{chen2016compressing} and the Deep K-means \cite{wu2018deep} with same  CR equal to 16. We think this shows that the scalar quantization is better in preserving accuracy, which is consistent with the experimental results on TT-Conv. 
\begin{table}[h]
 \setlength{\abovecaptionskip}{1pt}
 \setlength{\belowcaptionskip}{1pt}
\begin{center}
\caption{The results of compressing FreshNet. $\Delta$ means the change of accuracy compared to the pretrained model.}
\label{tab:FreshNet}
\small{
\begin{tabular}{|c|c|c|}
    \hline
     Model & CR & $\Delta (\%)$\\
    \hline
    Hashed Net \cite{chen2015compressing} & 16 & -9.79\\
    FreshNet \cite{chen2016compressing} & 16 & -6.51\\
    \hline
    Deep K-means \cite{wu2018deep} & 16 & -1.3\\
    \hline
    DP-Net (2 bits) & 16 & \textbf{-0.57}\\
    KMeans-Net (2 bits) & 16 & -0.76 \\
    \hline
\end{tabular}
}
\end{center}
\end{table}

\subsubsection{Compressing Wide ResNet}
We compressed Wide ResNet \cite{zagoruyko2016wide} by combining pruning, quantization and Huffman coding (the three-stage pipeline). The sparsity for each layer and the pruning method  are the same as what was done in \cite{wu2018deep}, where a total CR=47 is achieved. So, the only difference is  we replace the Deep K-means with our DP-Net for the  weight quantization step. The results are listed in Table \ref{tab:Wide}. It shows that when the CR reaches 50, the  accuracy of the three-stage pipeline including Deep K-means suffered significant decrease. In contrast, using the proposed DP-Net yields an accuracy loss less than 3$\%$ even at a compression ratio of 77X. Our accuracy drop is also remarkably less than that using Deep K-means at CR=50. 
\begin{table}[h]
 \setlength{\abovecaptionskip}{1pt}
 \setlength{\belowcaptionskip}{1pt}
\begin{center}
\caption{The results of compressing Wide ResNet with the three-stage pipelines. $\Delta$ means the change of accuracy compared to the pretrained model.}
\label{tab:Wide}
\small{
\begin{tabular}{|c|c|c|}
    \hline
     Model & CR & $\Delta (\%)$\\
    \hline
%
    Deep K-means \cite{wu2018deep} & 47 & -2.23\\

    Deep K-means \cite{wu2018deep} & 50 & -4.49\\
    \hline
    DP-Net (3 bits) & 50 & -1.71\\
    
    DP-Net (2 bits) & \textbf{77} & -2.94\\
    \hline
\end{tabular}
}
\end{center}
\end{table}

\subsection{Result on ImageNet}

We also evaluated our method on GoogleNet trained on ImageNet ILSVRC2012 dataset. Both Deep K-means and the proposed DP-Net are used to compress GoogleNet. 
Table \ref{tab:google} shows DP-Net is also better than Deep K-means on large-scale dataset. The accuracy of Deep K-means decreased quickly when CR$>$4. So, the accuracy of Deep K-means was not reported in \cite{wu2018deep} for larger CR. In contrast, the accuracy of DP-Net is improved even if we use 4 bits to represent a scalar value, which implies a 7X compression ratio. Furthurmore, even if GoogleNet is compressed by 10X, the accuracy of DP-Net remains higher than that of Deep K-means with CR=4.
\begin{table}[h]
 \setlength{\abovecaptionskip}{1pt}
 \setlength{\belowcaptionskip}{1pt}
\begin{center}
\caption{The results of compressing GoogleNet. $\Delta^{\dag}$ and $\Delta^{\ddag}$ are the changes of top1-accuracy and top5-accuracy compared to the pretrained model, respectively.}
\label{tab:google}
\small{
\begin{tabular}{|c|c|c|c|}
    \hline
     Model & CR & $\Delta^{\dag} (\%)$ & $\Delta^{\ddag} (\%)$\\
    \hline
%
%
    Deep K-means \cite{wu2018deep} & 4 & -1.95& -1.14\\
    \hline
    DP-Net (3 bits) & 10 & -1.56& -0.88\\
    
    DP-Net (4 bits) & 7 & \textbf{+0.3}& \textbf{+0.2}\\
    \hline
\end{tabular}
}
\end{center}
\end{table}
\subsection{Result for Robust Model}
To test the effectiveness of extended DP-Net on the robust model, we compressed the SmallCNN for MNIST and ResNet-18 for CIFAR-10. The models are trained by TRADES \cite{zhang2019theoretically}, where a minimax loss function is used to make the model robust. Because there is few work on compressing a robust model, we consider a  baseline called DP-Net WR for comparison. DP-Net WR directly quantizes the weights by DP without training the clustering-friendly network.  Table \ref{tab:robust} shows the accuracy loss of SmallCNN on MNIST and ResNet-18 on CIFAR-10, respectively. From them, we see that compressing a robust model is more challenging. For example, directly quantizing the SmallCNN to 2 bits  makes 0.36$\%$ accuracy drop on natural images, but the accuracy drop on adversarial examples becomes 5.86$\%$. 
The results in Table \ref{tab:robust} show that the extended DP-Net can compress the robust models by 14X and 16X with negligible accuracy loss.
In the results of DP-Net WR, $|\Delta_{adv}|< |\Delta_{nat}|$ for  ResNet-18, because the original accuracy on adversarial examples is much less than the original accuracy on natural images ($40.29\%$ versus $92.15\%$). 
\begin{table}[h]
 \setlength{\abovecaptionskip}{1pt}
 \setlength{\belowcaptionskip}{1pt}
\begin{center}
\caption{The results of compressing the robust SmallCNN (on MNIST) and ResNet-18 (on CIFAR-10). $\Delta_{nat}$ and $\Delta_{adv}$ are the accuracy changes for natrual images and adversarial images (under PGD attack) compared to the pretrained model, respectively.}
\label{tab:robust}
\small{
\begin{tabular}{|@{~}c@{~}|@{~}c@{~}|@{~}c@{~}|@{~}c@{~}|@{~}c@{~}|}
    \hline
     Model & CR & Model & $\Delta_{nat} (\%)$ & $\Delta_{adv} (\%)$\\
    \hline
    DP-Net  WR (2 bits) & 14 & SmallCNN & -0.36& -5.86\\
    \hline
    Extended DP-Net (2 bits) & \textbf{14} & SmallCNN & \textbf{-0.05}& \textbf{-2.05}\\
    \hline
    DP-Net WR (2 bits) & 16 & ResNet-18 & -22.19& -16.15\\
    \hline
    Extended DP-Net (2 bits) & \textbf{16} & ResNet-18 & \textbf{-1.45}& \textbf{+0.41}\\
    \hline
\end{tabular}
}
\end{center}
\end{table}
\subsection{Custom Accelerator}
To demonstrate the specific technique proposed in Section III.D, we test the runtime of DNN inference with the DP-Net deployed on FPGA.  The experiments involve an FC layer and a convolutional layer from GoogleNet compressed by DP-Net with 4-bit representation, and they are multiplied by random vectors. For the convolutional layer with $n_{conv} \times m_{conv} \times h_{conv} \times w_{conv}$ weights, we first reshape it to an $n_{conv} \times m_{conv} h_{conv} w_{conv}$ matrix. 
The baseline approach for matrix-vector multiplication takes in the matrix represented by 32-bit floating-point numbers stored in BRAM and/or distributed RAM, and the random vector stored in distributed RAM (ROM) to facilitate fast access. It adopts a pipelined design technique, and performs floating-point multiply-add operation by rows. The  operation is implemented with the Floating-point (7.1) IP Core provided by Xilinx, based on DSP Slice Full Usage.
For our approach based on DP-Net, the quantized/compressed matrix is small enough to be stored in the distributed RAM synthesized by LUTs. And, the technique in Section III.D is implemented to perform the matrix-vector multiplication.
The experimental results are listed in Table \ref{tab:speed}.
It shows that the custom accelerator for our DP-Net is at least 5X faster than the the baseline approach without compression.
In this experiment, 4-bits representation means that 16 clusters are used for preserving the accuracy of 
GoogleNet. 
For other DNN models, the weights can be clustered into fewer classes with DP-Net, which in turn would lead to larger compression ratio and more inference acceleration on FPGA.   

\begin{table}[h]
 \setlength{\abovecaptionskip}{1pt}
 \setlength{\belowcaptionskip}{1pt}
\begin{center}
\caption{The time of a matrix-vector multiplication on FPGA.}
\label{tab:speed}
\small{
\begin{tabular}{|@{~}c@{~}|@{~}c@{~}|c|@{~}c@{~}|@{~}c@{~}|}
    \hline
    Matrix Source & Matrix Size & Algorithm & Time (ms) & Speedup \\
    \hline
   \multirow{2}*{\Centerstack{FC layer}} & \multirow{2}*{\Centerstack{$1000\times 1024$}} & Baseline & 61.5 & -- \\
   \cline{3-5}
      &   & Ours & 11.9 & \textbf{5.1X} \\
    \hline
    \multirow{2}*{\Centerstack{Convolutional \\ layer}} & \multirow{2}*{\Centerstack{$384 \times 1728$}} & Baseline & 39.8 & -- \\ \cline{3-5}
      &   & Ours & 7.60 & \textbf{5.2X} \\
    \hline 
\end{tabular}
}
\end{center}
\end{table}
\section{Conlusion}

In this paper, a DNN compression scheme called DP-Net is proposed, which includes training a clustering-friendly network and using a dynamic programming (DP) based algorithm to obtain the optimal solution of weight quantization. Exhaustive  experiments have been carried out to show that the DP based algorithm is  better than the existing method for weight quantization. And, DP-Net can be combined with other compression methods. An experiment  on Wide ResNet demonstrates this and shows 77X compresion ratio with less than 3\% accuracy drop. DP-Net is also extended to compress the robust DNN models, showing 16X compression with negligible accuracy loss. Lastly, a technique based on DP-Net for inference acceleration is proposed. The experiment on FPGA shows that it brings 5X speedup to the computation associated with  FC layers and convolutional layers.

\bibliographystyle{IEEEtran}
\bibliography{IEEEabrv,IEEEexample}
\end{document}